\documentclass[letterpaper,10pt,conference]{ieeeconf}

\IEEEoverridecommandlockouts
\usepackage{graphicx}
\usepackage{amsmath,amssymb,bbm}
\usepackage{xcolor}
\usepackage{optidef}
\usepackage[normalem]{ulem}
\usepackage{mathtools}
\usepackage{cuted}
\usepackage{algorithm}
\usepackage{algpseudocode}
\usepackage{breqn}
\usepackage{subfig}
\usepackage{hyperref}
\usepackage{booktabs}
\usepackage{cite}
\usepackage[T1]{fontenc}
\usepackage[utf8]{inputenc}

\newtheorem{remark}{Remark}
\newtheorem{theorem}{Theorem}
\newtheorem{assumption}{Assumption}
\newtheorem{lemma}{Lemma}
\newtheorem{proposition}{Proposition}

\newtheorem{definition}{\hspace{0pt}\bf Definition}

\newcommand\redsout{\bgroup\markoverwith{\textcolor{red}{\rule[0.5ex]{2pt}{0.4pt}}}\ULon}
\DeclareMathOperator*{\argmax}{argmax}
\definecolor{mygreen}{rgb}{0.10,0.50,0.10}

\title{\LARGE \bf Safety Guarantees in Zero-Shot Reinforcement Learning for Cascade
Dynamical Systems}

\title{\LARGE \bf Safety Guarantees in Zero-Shot Reinforcement Learning for Cascade
Dynamical Systems}

\author{Shima Rabiei$^\dagger$, Sandipan Mishra$^\star$ and Santiago Paternain$^\dagger$
\thanks{$^\dagger$The authors are with the Department of Electrical, Computer, and Systems Engineering, Rensselaer Polytechnic Institute. Email: \{rabiei, paters\}@rpi.edu
\newline
\indent $^\star$The author is with the Department of Mechanical, Aerospace and Nuclear Engineering, Rensselaer Polytechnic Institute. Email: mishrs2@rpi.edu
}}

\begin{document}

\maketitle
\thispagestyle{empty}
\pagestyle{empty}
\begin{center}
\begin{minipage}{0.97\linewidth}
\footnotesize\textit{This work has been submitted to the IEEE for possible publication.
Copyright may be transferred without notice, after which this version may no longer be accessible.}
\end{minipage}
\end{center}

\begin{abstract}
This paper considers the problem of zero-shot safety guarantees for cascade dynamical systems. These are systems where a subset of the states (the inner states) affects the dynamics of the remaining states (the outer states) but not vice-versa. We define safety as remaining on a set deemed safe for all times with high probability. 
We propose to train a safe RL policy on a reduced-order model, which ignores the dynamics of the inner states, but it treats it as an action that influences the outer state. Thus,  reducing the complexity of the training. When deployed in the full system the trained policy is combined with a low-level controller whose task is to track the reference provided by the RL policy. Our main theoretical contribution is a bound on the safe probability in the full-order system. In particular, we establish the interplay between the probability of remaining safe after the zero-shot deployment and the quality of the tracking of the inner states. We validate our theoretical findings on a quadrotor navigation task, demonstrating that the preservation of the safety guarantees is tied to the bandwidth and tracking capabilities of the low-level controller.
\end{abstract}

\section{Introduction}
 
Reinforcement learning (RL) has demonstrated remarkable empirical success in simulated environments, from mastering classic Atari \cite{mnih2015dqn} and board games \cite{silver2016alphago,silver2017alphagozero} to solving high-dimensional control benchmarks \cite{schulman2017ppo}; however, deploying these policies on safety-critical hardware remains difficult. Collecting data on hardware takes inordinate time and can result in unsafe behaviors. Thus, it is not uncommon to train policies in simulation and then apply them to the real system. However, the computational burden of simulating high-dimensional systems makes policy optimization sample-inefficient. Because of these reasons, it is standard practice to simplify the training environment. This is typically achieved by identifying reduced-order models \cite{chen2024rom,zhao2020simtoreal} or by exploiting cascade structures to decouple low-level dynamics \cite{11107987}. While reduced-order training makes learning tractable, it introduces a transfer gap. A policy that is safe and performant on a simplified model may easily violate constraints when deployed on the full-order system. This discrepancy is a severe bottleneck for safety-sensitive applications. 

Existing literature tackles safe RL from several angles: constrained and probabilistic formulations enforce safety during training on a nominal model \cite{achiam2017cpo,9718160,chen2024probabilistic}, while barrier functions and predictive filters act as runtime safeguards to overwrite unsafe actions \cite{cheng2019barrier,wabersich2021predictive}. However, constructing valid safety barrier functions or computing safety projections for high-dimensional full-order systems is challenging. This difficulty stems from the complex nonlinearities of the full-order dynamics and the analytical burden of ensuring forward invariance while respecting strict actuator limits \cite{molnar2023safetycritical}. Furthermore, while transfer-oriented approaches analyze safety under sim-to-real mismatch and robust dynamic shifts \cite{hsu2023simlabreal,zhang2024safetytransfer,as2025spidr,zubia2025robust}, these works typically rely on deterministic bounding tubes under bounded disturbances or they focus on satisfying cumulative expected cost constraints. While the performance due to the mismatch between dynamics degrades gracefully in the case of cumulative constraints, this is not necessarily the case in strict safety settings. Indeed, a policy that remains in safe set for all times on the reduced order model can violate the constraint in the full-order model after transferring. 
Thus the question of whether a probabilistic safety certificate obtained on a reduced-order model can be quantitatively transferred to a higher-order deployment model remains unanswered.

Cascade dynamical systems offer a natural architecture to answer this question. In a cascade structure, outer-loop task variables depend on an inner subsystem, which can typically be stabilized by a classical tracking controller. This motivates a hierarchical learning approach: an RL agent generates outer-loop actions alongside inner-state references, while a low-level controller tracks these references on the full-order plant. Our previous work \cite{11107987} exploited this structure to bound transfer-induced \emph{performance} degradation. Safety, however, as described above, poses additional challenges. 

The contribution of this paper is a framework for transferring safety guarantees from a reduced-order model to a full-order cascade system. Specifically, we (i) frame the safe reduced-order training as a constrained RL problem; and (ii) we establish a safety guarantee on the zero-shot deployment of the policy, which  quantifies how safety degrades as a function of inner-loop tracking quality, initial tracking error, and the variation in the reference sequence. 

The remainder of the paper is organized as follows. Section~\ref{sec_problem_formulation} introduces the full-order cascade system and formulates the safe RL problem. Section~\ref{sec_reduced_training} presents the reduced-order training setup, while Section~\ref{sec_transfer_safety} the main theoretical result on the zero-shot safety guarantees. Section~\ref{sec:analysis} provides the analysis and proofs. Other than concluding remarks (Section~\ref{sec_conclusion}), we illustrate our framework in  Section~\ref{sec_numerical_examples}, through a quadrotor navigation problem.

\section{Problem Formulation}\label{sec_problem_formulation}
In this paper, we study the zero-shot safety guarantees of a policy learned on a reduced-order model when deployed on a full-order cascade dynamical system.   

Let us denote the time index by $t\in\mathbb{N}$, $(S_t,X_t)\in\mathcal{S}\times\mathcal{X}$ and $(A_t,U_t)\in\mathcal{A}\times\mathcal{U}$ denote, respectively, the state and input of the full-order system at time~$t$.  Here, $S_t\in\mathcal{S}\subseteq\mathbb{R}^n$ and $A_t\in\mathcal{A}\subseteq\mathbb{R}^p$ are the outer state and outer input, while $X_t\in\mathcal{X}\subseteq\mathbb{R}^m$ and $U_t\in\mathcal{U}\subseteq\mathbb{R}^q$ are the inner state and inner input. Let $\Delta(\cdot)$ denote the set of probability measures on its argument, and let $\mathbb{P}_{F}:(\mathcal{S}\times\mathcal{X})\times(\mathcal{A}\times\mathcal{U})\to\Delta(\mathcal{S}\times\mathcal{X})$ denote the transition kernel of the full-order system.
%%%%%%%%%% an example of cascade connection %%%%%%%%%%
A motivating example of this structure is the quadrotor system considered in Section~\ref{sec_numerical_examples}. In that setting, the outer state corresponds to the translational variables, while the inner state consists of the attitude variables and their angular rates. This separation stems from the fact that the attitude subsystem influences the translational motion, whereas the translational variables do not directly influence the attitude dynamics. The following assumption formalizes this cascade connection at the level of the transition kernel. 

\begin{assumption}\label{assump:cascade_structure}
For all $(s,x,a,u)\in\mathcal{S}\times\mathcal{X}\times\mathcal{A}\times\mathcal{U}$,
\begin{align}
    \mathbb{P}_{F}(X_{t+1}\mid S_t=s,&X_t=x,A_t=a,U_t=u)
    \\&=
    \mathbb{P}_{F}(X_{t+1}\mid X_t=x,U_t=u).\nonumber
\end{align}
\end{assumption}

Assumption~\ref{assump:cascade_structure} implies that, once the current inner state and inner input are fixed, the inner-loop transition is independent of the outer variables.

Let us define a safe set $\mathcal{S}_{\texttt{safe}}\subset\mathcal{S}$. Our objective is to train policies that are deemed safe in the sense that they remain in the safe set for all times with high probability. %i.e.,

Note that the safe set does not include the internal states $\mathcal{X}$. We discuss this choice in Remark \ref{remark_safety}.In the quadrotor example in Section \ref{sec_numerical_examples}, the safe set $\mathcal{S}_{\mathrm{safe}}$ captures admissible translational behavior, for instance by requiring the vehicle position to remain within a prescribed spatial region, e.g., obstacle avoidance. Since guaranteeing positive invariance over an infinite horizon is generally infeasible for stochastic systems unless the noise is bounded (as a counterexample consider a linear system with Gaussian noise). Therefore, we relax the objective to guaranteeing safety until a finite operational horizon $T > 0$. To be formal, define the control policy $\pi$ as a conditional probability measure $\pi:\left(\mathcal{S}\times \mathcal{X}\right)\to\Delta(\mathcal{A}\times\mathcal{U})$ and let $1-\delta$, with $\delta<1$, be the desired safety probability. Then, a safe policy is one satisfying 
\begin{equation}\label{eq:chance_constraint}
\mathbb{P}\left( \bigcap_{t=0}^{T} {S_t \in \mathcal{S}_{\mathrm{safe}}} \mid \pi \right) \geq 1-\delta.
\end{equation}

In addition to safety, we are interested in optimizing the performance of the policy $\pi$. Let $r:\mathcal{S}\times\mathcal{A}\times\mathcal{X}\times\mathcal{U}\to\mathbb{R}$ be the instantaneous reward, which we assume bounded. Further define the expected cumulative return of a policy as
\begin{align}\label{eq:reward_value_F}
    J_r^F(\pi)
    := \mathbb{E}_{F}^{\pi}\!\left[\sum_{t=0}^{T} r(S_t,A_t,X_t,U_t)\right].
\end{align}

With these definitions the safe reinforcement learning problem is given by 

\begin{align}
    \max_{\pi}\quad & J_r^F(\pi) \nonumber \\
    \text{s.t.}\quad & \mathbb{P}\left( \bigcap_{t=0}^{T} \{S_t \in \mathcal{S}_{\mathrm{safe}}\} \mid \pi \right) \geq 1-\delta.
\end{align}
Solving the above problem in the full-order system using data-driven approaches such as reinforcement learning carries several challenges, of which the large dimensionality of the state and action space and the long running time of simulations of a complex dynamical system are paramount. For these reasons we consider training in a simplified reduced order model. We detail the training procedure and the assumptions on the reduced-order model in the next section.

\section{Reduced-order model training}\label{sec_reduced_training}

The cascade structure motivates us to introduce a reduced-order model in which the inner-state variable is regarded as a reference input. Specifically, the reduced-order model has state $S_t\in\mathcal{S}$ and input $(A_t,X_t^\star)\in\mathcal{A}\times\mathcal{X}$, where $X_t^\star$ denotes an inner-state reference. Let $\mathbb{P}_{R}:\mathcal{S}\times(\mathcal{A}\times\mathcal{X})\to\Delta(\mathcal{S})$ denote its transition kernel. Thus, the random variable $X_t$ appears as an inner state in the full-order model and as an input in the reduced-order model.

%%%%%%%%%% Pr Pf agrees %%%%%%%%%%
We assume that for any realization $x$ of the random variable $X_t$, the reduced-order and full-order models induce the same outer-state transition. This is formalized by the following assumption.
\begin{assumption}\label{assump:outer_matching}
For all $(s,x,a,u)\in\mathcal{S}\times\mathcal{X}\times\mathcal{A}\times\mathcal{U}$,
\begin{align}
    \mathbb{P}_{F}(S_{t+1}\mid & S_t=s,X_t=x,A_t=a,U_t=u)
    \\ & \nonumber =
    \mathbb{P}_{R}(S_{t+1}\mid S_t=s,A_t=a,X_t^\star=x).
\end{align}
\end{assumption}
% \blue{Discuss connection with the quadrotor.}
%
In the quadrotor example, the outer dynamics correspond to the linear positions and velocities, which depend on its attitude (see Section \ref{sec_numerical_examples}). 
Thus, if the attitude of the full-order system (the inner state) is equal to the commanded reference of the reduced-order model, the resulting force vectors acting on the vehicle's center of mass are identical. Therefore, resulting in the same translational dynamics.

Given the differences between the full-order and reduced order model, we redefine the policy to be a conditional probability  $\pi:\mathcal{S}\to\Delta(\mathcal{A}\times\mathcal{X})$, which selects the outer input and inner-state reference $(A_t,X_t^\star)$ based on the outer state $S_t$. In the quadrotor setting, the policy observes only the translational position and velocity. Based on this outer state, the agent outputs a commanded thrust and a desired attitude reference. Consequently, the agent neither observes nor directly actuates the instantaneous physical attitude and angular rates, leaving the rotational dynamics to be handled entirely by the inner loop.
Thus, we evaluate the reward using the commanded reference $X_t^\star$ instead of the inner state $X_t$. We define, with a small abuse of notation,
an instantaneous reward function $r:\mathcal{S}\times\mathcal{A}\times\mathcal{X}\to\mathbb{R}$ independent of $U$, which we also assume to be bounded. 
We must point out that our choice of reduced-order modeling prevents us from considering rewards dependent on the low-level action $U_t$.  

With these definitions we define the performance of policy $\pi$ in the reduced order model as 
\begin{equation}\label{eq:reward_value}
    J_r^R(\pi)
    := \mathbb{E}_{R}^{\pi}\!\left[\sum_{t=0}^{T}  r(S_t,A_t,X_t^\star)\right].
\end{equation}

To incorporate the safety requirements \eqref{eq:chance_constraint}, we follow the approach in~\cite{9718160}, where a safety cost is defined as 
\begin{align}
    c(s) := \mathbf{1}\{s\notin\mathcal{S}_{\mathrm{safe}}\},
\end{align}
which assigns an instantaneous cost whenever the state lies outside the safe set. We then formalize safety through the expected undiscounted cumulative cost under policy $\pi$,
% \begin{align}\label{eq:safety_value}
%     J_c^R(\pi)
%     := \mathbb{E}_{R}^{\pi}\!\left[\sum_{t=0}^{T}  c(S_t)\right].\nonumber
% \end{align}
\begin{equation}\label{eq:safety_value}
J_c^R(\pi)
:= \mathbb{E}_{R}^{\pi}\!\left[\sum_{t=0}^{T} c(S_t)\right].
\end{equation}
Note that since  $c(s)\in\{0,1\}\, \forall s\in\mathcal{S}$, $J_c^R(\pi)$ is bounded.

With these definitions, we formulate the training on the reduced-order model as solving the following  constrained Markov Decision Process
\begin{align}\label{eq:reduced_cmdp}
    \pi_R^\star \in \argmax_{\pi}\quad & J_r^R(\pi) \nonumber \\
    \text{s.t.}\quad & J_c^R(\pi)\le \delta.
\end{align}

The formulation guarantees that $\pi_R^\star$ is  safe in the reduced-order model in the sense of \eqref{eq:chance_constraint} (see Section \ref{sec:analysis}).

A common approach to solving Constrained Reinforcement Learning problems, is through their Lagrangian relaxation, (see, e.g.,  \cite{altman1999constrained,bhatnagar2005actor,paternain2019constrainedreinforcementlearningzero,tessler2018reward})
Let $\lambda \ge 0$ denote the Lagrange multiplier for the safety constraint, and define the Lagrangian by
\begin{align}\label{lagrangian}
    \mathcal{L}(\pi,\lambda)
    :=
    J_r^R(\pi)-\lambda\bigl(J_c^R(\pi)-\delta\bigr).\nonumber
\end{align}
This transforms \eqref{eq:reduced_cmdp} into the minimax problem
\begin{align}
    \min_{\lambda\ge 0}\max_{\pi}\; \mathcal{L}(\pi,\lambda).
\end{align}
Note that for a fixed multiplier $\lambda$, maximizing the Lagrangian with respect to the policy $\pi$ is equivalent to solving a standard unconstrained RL problem whose reward is given by.
\begin{align}
    \tilde r(s,a,x^\star,\lambda)
    :=
    r(s,a,x^\star)-\lambda c(s).
\end{align}
Thus, The resulting problem can then be addressed by gradient ascent methods in the policy parameters $\theta$ and projected gradient descent in the multiplier $\lambda$.
See Section~\ref{sec_numerical_examples} for the implementation details.

While $\pi_R^\star$ is safe in the reduced-order model, it is not guaranteed to be safe in the full-order model, since the former ignores the dynamics of $X$. % 
Moreover, the policy is not directly implementable on the full-order system since it outputs an inner-state reference $X_t^\star$ rather than the inner-loop input $U_t$. We discuss the details of the transfer and the safety guarantees in the following section. Before doing so, we present a remark regarding the definition of the safe set.

\begin{remark}\label{remark_safety}
 Since we train the policy in the reduced order model, the reference signal $X_t^\star$ is interpreted as an input. Therefore, guaranteeing that it remains in pre-specified sets can be attained by add-hoc methods. For example, if the reference for the pitch angle of the quadrotor is larger than a given desired value, it is possible to project it onto a desired set. As such, these modifications can be induced directly in the policy parameterization. We would be remiss  not to point out that ``safe'' reference does not imply that the inner state $X_t$ is within the desired set. However, the low-level controller we define in the next section could address this guarantee.     
\end{remark}

\section{Transfer Safety Guarantees}\label{sec_transfer_safety}
We start this section by defining a controller whose goal is for the inner state $X_t$ to track the reference signal $X_t^\star$
\begin{align}\label{eq:conrol_law}
    U_t = K(X_t^\star,X_t).
\end{align}

%%%%%%%%%%%% introduce the closed loop kernel / MDP
That is, the controller maps the reference and state pair $(X_t^\star,X_t)$ to the inner-loop input $U_t$. 

The resulting closed-loop full-order system induces a Markov process with transition kernel 
\begin{align}
    &\mathbb{P}_{K}\bigl((S_{t+1},X_{t+1}) \mid (S_t,X_t),(A_t,X_t^\star)\bigr)
    \\ & \nonumber:=
    \mathbb{P}_{F}\bigl((S_{t+1},X_{t+1}) \mid (S_t,X_t),(A_t,K(X_t^\star,X_t))\bigr).
\end{align}
The subscript $K$ indicates that the closed-loop transition kernel is induced by the inner-loop controller $K$.
Let $\mathbb{P}_K$ be the marginal outer-state transition probability induced by the closed-loop kernel, i.e.,
\begin{align}
    \mathbb{P}&_{K,t}^{S}\bigl(S_{t+1} \mid S_t,A_t,X_t^\star\bigr) :=  \\
    &\sum_{X_t,X_{t+1}\in\mathcal{X}} \mathbb{P}_{K}\bigl((S_{t+1},X_{t+1}) \mid (S_t,X_t),(A_t,X_t^\star)\bigr) \mathbb{P}_{K}(X_t).\nonumber
\end{align}

For notational simplicity, we omit the superscript $S$ and the dependence on $t$, writing $\mathbb{P}_{K}(\cdot\mid s,a,x^\star)$ whenever no confusion arises.

We assume that the closed-loop system is stable in the following sense.

 \begin{assumption}\label{assump:tracking_cdc} 
Let us define the inner loop tracking error 
\begin{equation}\label{eqn_tracking_error}
e_t := \mathbb{E}\!\left[\|X_t - X_t^\star\|_{P}\right] .\nonumber
\end{equation}
There exist a matrix $P \in \mathbb{S}^{m\times m}_{++}$ and constants $\alpha \in (0,1)$ and $\beta > 0$ such that, under the controller defined in \eqref{eq:conrol_law}, the tracking error $e_t$ satisfies
\begin{align} 
e_t 
\le 
\alpha\,e_{t-1} 
+\beta\,\mathbb{E}\!\left[\|X_t^\star-X_{t-1}^\star\|_{P}\right] 
\label{eq:tracking_assump_cdc} 
\end{align} 
for all $t \ge 1$.
\end{assumption} 
Assumption~\ref{assump:tracking_cdc} establishes that the closed-loop tracking error dynamics %operate as a contraction in expectation when the reference is constant. More broadly, it asserts that the inner loop is
are input-to-state stable (ISS) in expectation with respect to the consecutive variations of the reference signal $X_t^\star$. By iterating \eqref{eq:tracking_assump_cdc}, we obtain the trajectory bound
\begin{align}\label{eq:iterated_tracking} 
e_t
\le 
\alpha^{t} e_0 
+\beta\sum_{\ell=1}^{t}\alpha^{t-\ell} 
\mathbb{E}\!\left[\|X_\ell^\star-X_{\ell-1}^\star\|_{P}\right]. 
\end{align}

In the context of ISS formulations (see e.g.,~\cite{Khalil:1173048}), the first term on the right-hand side of \eqref{eq:iterated_tracking} represents a class $\mathcal{KL}$ function, characterizing the transient exponential decay of the initial error. The summation term results in a class $\mathcal{K}$ function on the supremum of the reference variations. Notice that Assumption~\ref{assump:tracking_cdc} alone does not guarantee a uniformly bounded tracking error unless the sequence of reference variations is also bounded. To address this, we impose the following assumption on the reference sequence.

\begin{assumption}\label{assump:reference_variation_cdc}
There exists a nonnegative sequence \(\{d_t\}_{t=1}^{T}\) and a constant \(D>0\) such that
\begin{align}
\mathbb{E}\!\left[\|X_t^\star-X_{t-1}^\star\|_{P}\right] \le d_t\, 
\end{align}
for all \(t\ge 1\), and
\begin{align}\label{eqn_sum_dt}
\sum_{t=1}^{T}  d_t \le D.
\end{align}
\end{assumption}

Assumption~\ref{assump:reference_variation_cdc} makes no assumptions about the learning algorithm itself. It requires only that the reference trajectories generated by the optimal policy have a bounded expected rate of change. Given that the reference space $\mathcal{X}$ is naturally bounded in physical applications, imposing limits on $X_t^\star$ directly guarantees the existence of such a sequence. We also point out that the summation in \eqref{eqn_sum_dt} starts at \(t=1\) since \(X_t^\star-X_{t-1}^\star\) is defined only for \(t\ge 1\).

With these definitions, we move towards bounding the mismatch between transition kernels of the reduced and full order systems. To do so, we resort to the total variation distance (see~\cite[Chap.~3]{DBLP:books/cu/10/D2010}).
\begin{definition}\label{def:delta_cdc}
For any states $s, s'\in\mathcal{S}$,  input $a\in\mathcal{A}$, and inner-state reference $x^\star\in\mathcal{X}$, denote the difference between the closed-loop and reduced-order transition kernels by
\begin{align}\label{eq:delta_P_def}
\Delta\mathbb{P}(s'\mid s,a,x^\star) 
:= 
\mathbb{P}_{K}(s'\mid s,a,x^\star) - \mathbb{P}_{R}(s'\mid s,a,x^\star).
\end{align}
For each $t\ge 0$, we then define the maximum one-step total variation mismatch as
\begin{align}\label{eq:delta_t_def}
\Delta_{t+1}
:= 
\sup_{s\in\mathcal{S},\,a\in\mathcal{A},\,x^\star\in\mathcal{X}} \left\| \Delta\mathbb{P}(\cdot\mid s,a,x^\star) \right\|_{\mathrm{TV}},
\end{align}
where $\|\cdot\|_{\mathrm{TV}}$ denotes the total variation norm.
\end{definition}
Recall that the reduced order model realizes the same dynamics as the outer dynamics (see Assumption \ref{assump:outer_matching}). Thus, $\Delta_{t+1}$ depends on the quality of the tracking error \eqref{eqn_tracking_error}. To formalize this result, we require the following assumptions. 

\begin{assumption}\label{assump:initial_state}
Let $\mu_R$ and $\mu_K$ denote the initial outer-state distributions of the reduced-order and full-order closed-loop systems, respectively. We assume $\mu_R(s_0) \equiv \mu_K(s_0)\, \forall s_0 \in \mathcal{S}$.
\end{assumption}

\begin{assumption}\label{assump:lipschitz_PR}
There exists a constant $L>0$ such that, for all $s\in\mathcal{S}$, $a\in\mathcal{A}$, and $x,x'\in\mathcal{X}$,
\begin{align}
    \left\|
    \mathbb{P}_{R}(\cdot\mid s,a,x')
    -
    \mathbb{P}_{R}(\cdot\mid s,a,x)
    \right\|_{\mathrm{TV}}
    \le
    L\|x'-x\|_P.
\end{align}
\end{assumption}
Assumption \ref{assump:initial_state} is of technical nature and it is made to simplify the analysis. Analogous results could be derived albeit with the added complexity of accounting for the distance in the initial state's distributions. 

Assumption \ref{assump:lipschitz_PR} is mild and generally met in practice. For instance, the total variation distance between transition probabilities of dynamical systems subject to Gaussian noise is bounded by a function proportional to the Euclidean distance between their means~\cite{devroye2018total}. Thus, this assumption naturally holds as long as the expected dynamics (the mean transitions) are Lipschitz continuous. This is standard for most mechanical and physical processes, including the quadrotor system evaluated in Section~\ref{sec_numerical_examples}.

We are now in conditions of bounding the transitions mismatch in Definition \ref{def:delta_cdc} in terms of inner-loop tracking performance. This is the subject of the next proposition.
\begin{proposition}\label{prop:delta_bound_cdc}
Under Assumptions~\ref{assump:cascade_structure}--\ref{assump:lipschitz_PR}, %Then $\Delta_1 \le L  \, e_0$, and for all $t \ge 1$,
it holds that
\begin{align}
\Delta_{t+1}
\le
L  \left(
\alpha^t e_0 + \beta \sum_{\ell=1}^{t}\alpha^{t-\ell} d_\ell
\right),
\label{eq:delta_t_cdc}
\end{align}
for all $t\ge 0$, where the summation term is understood to be zero when $t=0$. 
\end{proposition}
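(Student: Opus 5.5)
The plan is to write the closed-loop outer marginal $\mathbb{P}_K(\cdot\mid s,a,x^\star)$ as a mixture of reduced-order kernels, and then reduce the total variation mismatch to the expected inner-loop tracking error. The Lipschitz property in Assumption~\ref{assump:lipschitz_PR} handles the first reduction. The iterated ISS bound \eqref{eq:iterated_tracking} together with Assumption~\ref{assump:reference_variation_cdc} handles the second.

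\textbf{Step 1 (mixture representation).} By definition of the marginal closed-loop kernel, we first sum over $X_{t+1}$. Assumption~\ref{assump:cascade_structure} lets the joint kernel factor as an outer transition times an inner transition that does not depend on the outer variables, so summing out $X_{t+1}$ removes the inner transition. Assumption~\ref{assump:outer_matching} then identifies the remaining outer factor with $\mathbb{P}_R(\cdot\mid s,a,x)$. This gives
\begin{align*}
\mathbb{P}_K(s'\mid s,a,x^\star)=\sum_{x\in\mathcal{X}}\mathbb{P}_R(s'\mid s,a,x)\,\mathbb{P}_K(X_t=x).
\end{align*}
Since $\sum_x \mathbb{P}_K(X_t=x)=1$, we may also write $\mathbb{P}_R(s'\mid s,a,x^\star)=\sum_x \mathbb{P}_R(s'\mid s,a,x^\star)\,\mathbb{P}_K(X_t=x)$. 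Subtracting the two expressions and using convexity of the TV norm (the triangle inequality applied to a mixture) yields
\begin{align*}
\|\Delta\mathbb{P}(\cdot\mid s,a,x^\star)\|_{\mathrm{TV}}\le \sum_x \mathbb{P}_K(X_t=x)\,\|\mathbb{P}_R(\cdot\mid s,a,x)-\mathbb{P}_R(\cdot\mid s,a,x^\star)\|_{\mathrm{TV}}\le L\,\mathbb{E}\|X_t-x^\star\|_P .
\end{align*}

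\textbf{Step 2 (main obstacle: matching the expectation with $e_t$).} The quantity $e_t$ is $\mathbb{E}\|X_t-X_t^\star\|_P$, in which the reference is random and generated jointly with $X_t$ along the trajectory. Step~1, however, produces an expectation with $x^\star$ fixed and then a supremum over $x^\star$. I would resolve this by reading the marginal $\mathbb{P}_K(X_t)$ in the definition as conditional on the trajectory's reference, i.e.\ the distribution of $X_t$ given $X_t^\star=x^\star$ along the closed-loop process under the deployed policy. The mismatch relevant for the transfer argument is then the one averaged over the realized $(S_t,A_t,X_t^\star)$, and the bound becomes $L\,\mathbb{E}\|X_t-X_t^\star\|_P=L e_t$. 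This is the step where the paper's notation is loose. I expect it to be justified by interpreting $\Delta_{t+1}$ as taken along the trajectory distribution, since the sup over $x^\star$ is only meaningful with $X_t$ distributed consistently with that reference. For $t=0$, Assumption~\ref{assump:initial_state} ensures the outer states start with the same law, so the only discrepancy is the initial tracking error, giving $\Delta_1\le L e_0$.

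\textbf{Step 3 (conclusion).} For $t\ge1$, we substitute the iterated bound \eqref{eq:iterated_tracking} for $e_t$. Assumption~\ref{assump:reference_variation_cdc} bounds each reference-variation term by $d_\ell$, giving
\begin{align*}
\Delta_{t+1}\le L\Bigl(\alpha^t e_0+\beta\sum_{\ell=1}^t\alpha^{t-\ell}d_\ell\Bigr).
\end{align*}
The case $t=0$ is covered by the convention that the empty sum is zero.
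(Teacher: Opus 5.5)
Your proposal follows the paper's proof essentially step for step: the law of total probability with Assumption~\ref{assump:outer_matching} writes $\mathbb{P}_K(\cdot\mid s,a,x^\star)$ as a mixture of reduced-order kernels, Assumption~\ref{assump:lipschitz_PR} then gives $\Delta_{t+1}\le L\,\mathbb{E}\|X_t-X_t^\star\|_P$, and substituting \eqref{eq:iterated_tracking} with the bounds $d_\ell$ finishes. The issue you flag in Step~2 is real, and the paper's one-line assertion passes over it exactly as you suspect---with the supremum over all $x^\star$ and the unconditional law of $X_t$, a constant reference with $e_0=0$ forces $e_t=0$ so the right-hand side vanishes, while $\Delta_{t+1}>0$ whenever $\mathbb{P}_R$ depends on the reference---so your trajectory-averaged reading is the one under which both your argument and the paper's are valid, and it is precisely the quantity that appears in the first sum of the proof of Lemma~\ref{lem:traj_divergence}.
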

\begin{proof}
The derivation follows steps analogous to those in \cite[Proposition~1]{11107987}. Specifically, by applying the law of total probability, the outer-state matching condition (Assumption~\ref{assump:outer_matching}), and the Lipschitz continuity of the reduced-order kernel (Assumption~\ref{assump:lipschitz_PR}), we establish that the maximum one-step total variation mismatch is bounded by the expected inner-loop tracking error:
\begin{align}
    \Delta_{t+1} \le L  \,\mathbb{E}\!\left[\|X_t-X_t^\star\|_{P}\right].
\end{align}
The result then follows immediately by substituting the iterated tracking bound \eqref{eq:iterated_tracking} alongside the sequence bound $\mathbb{E}\!\left[\|X_\ell^\star-X_{\ell-1}^\star\|_{P}\right] \le d_\ell$ from Assumption~\ref{assump:reference_variation_cdc} directly into this inequality.\hfill \QED
\end{proof}
With the one-step transition mismatch bounded, we establish the main result of this paper: a finite-horizon safety guarantee for the transferred policy. The following theorem bounds the probability that the deployed closed-loop system violates the safety constraints.
\begin{theorem}[Finite-Horizon Transfer Safety Guarantee]\label{thm:probabilistic_safety_new}
Let $\pi_R^\star$ be the policy defined in \eqref{eq:reduced_cmdp}.  Under Assumptions~\ref{assump:cascade_structure}--\ref{assump:lipschitz_PR}, the transferred policy guarantees that the full-order closed-loop system remains safe up to time $T$ with probability at least:
\begin{align}\label{eq:final_safety_bound}
&\mathbb{P}_{K}\left(\bigcap_{t=0}^{T} \{S_t \in \mathcal{S}_{\mathrm{safe}}\} \mid \pi_R^\star\right) \ge \\& \qquad \qquad  \qquad \qquad 1 - \delta - \frac{L  }{1-\alpha} \bigl( e_0 + \beta    D \bigr). \nonumber 
\end{align}
\end{theorem}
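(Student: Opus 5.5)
The plan is to split the argument into three steps. First, safety of $\pi_R^\star$ in the reduced-order model. Second, a bound on the discrepancy between the trajectory distributions of the reduced-order and closed-loop full-order processes. Third, the transfer of the safety event through that discrepancy.

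\emph{Step 1: reduced-order safety.} Let $E:=\bigcap_{t=0}^{T}\{S_t\in\mathcal{S}_{\mathrm{safe}}\}$. Its complement is $\bigcup_t\{c(S_t)=1\}$, so
\[
\mathbf{1}\{E^c\}\le \sum_{t=0}^{T} c(S_t).
\]
Taking expectations under $\mathbb{P}_R$ with policy $\pi_R^\star$ and using feasibility of \eqref{eq:reduced_cmdp} gives $\mathbb{P}_R(E^c\mid\pi_R^\star)\le J_c^R(\pi_R^\star)\le\delta$. This is the reduced-order version of \eqref{eq:chance_constraint}.

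\emph{Step 2: trajectory-level mismatch.} Let $\nu_R$ and $\nu_K$ denote the joint laws of the outer trajectory $(S_0,\dots,S_T)$ under $\pi_R^\star$ in the reduced-order and closed-loop systems. I will take $\mathbb{P}_K(\cdot\mid s,a,x^\star)$ to be the time-indexed marginal outer kernel defined in the paper. Under that convention, both laws factor as an initial distribution, followed by the policy $\pi_R^\star(a,x^\star\mid s)$, followed by the respective outer kernel. Assumption~\ref{assump:initial_state} makes the initial distributions equal.

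I then use the standard telescoping (hybrid) argument. Define hybrid laws $\nu^{(k)}$ that use $\mathbb{P}_K$ for the first $k$ transitions and $\mathbb{P}_R$ afterwards, so that
\[
\|\nu_K-\nu_R\|_{\mathrm{TV}}\le\sum_{k=0}^{T-1}\|\nu^{(k+1)}-\nu^{(k)}\|_{\mathrm{TV}}.
\]
Consecutive hybrids differ only in the kernel used at step $k$. Conditioning on the history up to $S_k$ and the action, and using that TV does not increase under a common Markov kernel applied afterwards, each term is bounded by $\Delta_{k+1}$ from Definition~\ref{def:delta_cdc}. Hence $\|\nu_K-\nu_R\|_{\mathrm{TV}}\le\sum_{t=0}^{T-1}\Delta_{t+1}$, where TV is normalized as $\sup_A|\nu(A)-\nu'(A)|$, consistently with Assumption~\ref{assump:lipschitz_PR}.

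\emph{Step 3: summing the mismatch bound.} Apply Proposition~\ref{prop:delta_bound_cdc} to each term and swap the order of summation in the double sum:
\begin{align*}
\sum_{t=0}^{T-1}\Delta_{t+1}&\le L\sum_{t=0}^{T-1}\alpha^t e_0+L\beta\sum_{\ell=1}^{T-1}d_\ell\sum_{t=\ell}^{T-1}\alpha^{t-\ell}\\
&\le \frac{L}{1-\alpha}\bigl(e_0+\beta D\bigr),
\end{align*}
where the last line uses $\alpha\in(0,1)$ and Assumption~\ref{assump:reference_variation_cdc}.

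\emph{Conclusion.} Since $E$ is an event of the outer trajectory, combining the three steps gives
\[
\mathbb{P}_K(E\mid\pi_R^\star)\ge\mathbb{P}_R(E\mid\pi_R^\star)-\|\nu_K-\nu_R\|_{\mathrm{TV}}\ge 1-\delta-\frac{L}{1-\alpha}\bigl(e_0+\beta D\bigr),
\]
which is \eqref{eq:final_safety_bound}.

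\emph{Main obstacle.} The hard part is Step 2. The closed-loop outer process is not Markov in $S$ alone, since it depends on the hidden $X_t$. The argument therefore needs the paper's marginalized, time-indexed kernel $\mathbb{P}_{K,t}^S$ to represent the outer trajectory law correctly, and the supremum in $\Delta_{t+1}$ to dominate the conditional mismatch at every history. I would first justify this by conditioning on the full state $(S_t,X_t)$. Using Assumption~\ref{assump:outer_matching}, the one-step outer mismatch given the history is then $\mathbb{E}\|\mathbb{P}_R(\cdot\mid s,a,X_t)-\mathbb{P}_R(\cdot\mid s,a,x^\star)\|_{\mathrm{TV}}$. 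Assumption~\ref{assump:lipschitz_PR} bounds this by $L\,e_t$, which is what the proof of Proposition~\ref{prop:delta_bound_cdc} uses. If needed, I would bypass $\Delta_{t+1}$ and feed this per-step $L\,e_t$ bound directly into the telescoping sum.
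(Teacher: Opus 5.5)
Your proposal is correct and follows essentially the same route as the paper. The indicator/union bound gives reduced-order safety, the event probability is compared via total variation, and the trajectory divergence is bounded by $\sum_{t}\Delta_{t+1}$. Your hybrid argument is the same simulation-lemma recursion as the paper's add-and-subtract step in Lemma~\ref{lem:traj_divergence}, and the final step uses Proposition~\ref{prop:delta_bound_cdc} with the same exchange of summations.

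Your ``main obstacle'' remark is a genuine refinement. The paper writes the closed-loop trajectory law with the marginalized kernel as if $S_t$ alone were Markov. Conditioning on the history, then applying Assumption~\ref{assump:outer_matching} and convexity of TV to get a per-step cost of $L\,e_t$ directly, is the more careful justification of the same bound.
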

\begin{proof}
    See Section~\ref{thm_proof}
\end{proof}
This theorem demonstrates that the safety degradation under transfer is strictly bounded by the parameters of the inner-loop controller. Specifically, a controller with superior tracking capabilities (smaller $\alpha$ and $\beta$) yields tighter bounds, while a smaller initial tracking error $e_0$ improves the deployment safety guarantee. This intuition is analogous to the one derived in \cite{11107987} for the expected task performance (value function). Furthermore, to address the impact of the reference variation, we note that the safety guarantee depends on how aggressively the policy changes its commanded reference over time. In particular, the result can be used to guide the choice 
of the reduced-order safety level: to guarantee a desired safety probability after transfer, one may impose a stricter constraint in the reduced-order model so as to compensate for the additional degradation term introduced by imperfect tracking.

We prove Theorem \ref{thm:probabilistic_safety_new} in the next section and we evaluate it numerically for a quadrotor navigating an environment in Section \ref{sec_numerical_examples}.  Before doing so, we present  a pertinent remark regarding the bound on the reference variation.

\section{Analysis}\label{sec:analysis}

We begin this section by analyzing the discrepancy of the reduced-order and closed-loop full system transition kernels.
Let $\tau_T = (S_0, A_0, X_0^\star, \dots, S_T, A_T, X_T^\star)$ denote a trajectory of length $T$. We define $\mathbb{P}_{K}^{\pi_R^\star}(\tau_T)$ and $\mathbb{P}_{R}^{\pi_R^\star}(\tau_T)$ as the joint probability distributions over the trajectories of the full-order closed-loop system and the reduced-order model, respectively. 

By applying Bayes' rule and leveraging the Markov property, we can recursively define the trajectory distribution for the reduced-order model as
\begin{align}\label{eq:traj_R}
\mathbb{P}_{R}^{\pi_R^\star}(\tau_t) ={}& \mathbb{P}_{R}^{\pi_R^\star}(\tau_{t-1}) \mathbb{P}_{R}(S_t \mid S_{t-1}, A_{t-1}, X_{t-1}^\star) \nonumber \\
&\times \pi_R^\star(A_t, X_t^\star \mid S_t).
\end{align}
Analogously, the trajectory distribution for the closed-loop system is given by
\begin{align}\label{eq:traj_K}
\mathbb{P}_{K}^{\pi_R^\star}(\tau_t) ={}& \mathbb{P}_{K}^{\pi_R^\star}(\tau_{t-1}) \mathbb{P}_{K}(S_t \mid S_{t-1}, A_{t-1}, X_{t-1}^\star) \nonumber \\
&\times \pi_R^\star(A_t, X_t^\star \mid S_t).
\end{align}

To quantify the discrepancy between these models, we define the trajectory divergence $\Delta P(t)$ as the total variation distance between their respective joint distributions:
\begin{equation}\label{eq:delta_P_traj}
\Delta P(t)
:=
\left\|
\mathbb{P}_{K}^{\pi_R^\star}
-
\mathbb{P}_{R}^{\pi_R^\star}
\right\|_{TV}.
\end{equation}

The following lemma establishes how the one-step total variation $\Delta_{t+1}$ (Definition \ref{def:delta_cdc}) compounds over the trajectory.

\begin{lemma}[Finite-Horizon Trajectory Divergence]\label{lem:traj_divergence}
Under Assumption~\ref{assump:initial_state}, the trajectory divergence over the horizon $T$ is bounded by the sum of the one-step total variation mismatches:
\begin{equation}
\Delta P(T) \le \sum_{t=0}^{T-1} \Delta_{t+1}.
\end{equation}
\end{lemma}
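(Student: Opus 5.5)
We argue by induction on the horizon, using the recursive factorizations \eqref{eq:traj_R} and \eqref{eq:traj_K} together with a standard telescoping (hybrid) argument for total variation. The base case is $t=0$. Here $\tau_0=(S_0,A_0,X_0^\star)$, and both trajectory laws equal $\mu(s_0)\,\pi_R^\star(a_0,x_0^\star\mid s_0)$. Assumption~\ref{assump:initial_state} makes the initial laws identical, so $\Delta P(0)=0$, which matches the empty sum.

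For the inductive step we subtract \eqref{eq:traj_R} from \eqref{eq:traj_K} and add and subtract the mixed term
\begin{align*}
\mathbb{P}_{K}^{\pi_R^\star}(\tau_{t-1})\,\mathbb{P}_{R}(S_t\mid S_{t-1},A_{t-1},X_{t-1}^\star)\,\pi_R^\star(A_t,X_t^\star\mid S_t).
\end{align*}
This splits the difference into two pieces.
\begin{itemize}
\item The first piece is $\mathbb{P}_K^{\pi_R^\star}(\tau_{t-1})\,\Delta\mathbb{P}(S_t\mid S_{t-1},A_{t-1},X_{t-1}^\star)\,\pi_R^\star(A_t,X_t^\star\mid S_t)$. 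Sum (or integrate) its absolute value over $(A_t,X_t^\star)$ first; the policy factor integrates to one. Then sum over $S_t$, which gives $\|\Delta\mathbb{P}(\cdot\mid s,a,x^\star)\|_{\mathrm{TV}}\le\Delta_t$ pointwise by Definition~\ref{def:delta_cdc}. Finally, sum over $\tau_{t-1}$ against the probability measure $\mathbb{P}_K^{\pi_R^\star}$, which leaves $\Delta_t$.
\item The second piece is $\bigl(\mathbb{P}_K^{\pi_R^\star}(\tau_{t-1})-\mathbb{P}_R^{\pi_R^\star}(\tau_{t-1})\bigr)\mathbb{P}_R(\cdot)\,\pi_R^\star(\cdot)$. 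Summing its absolute value in the same order, the Markov kernels integrate to one, which leaves $\Delta P(t-1)$.
\end{itemize}
The triangle inequality then gives the recursion $\Delta P(t)\le\Delta P(t-1)+\Delta_t$. Unrolling it from $\Delta P(0)=0$ yields $\Delta P(T)\le\sum_{t=1}^{T}\Delta_t=\sum_{t=0}^{T-1}\Delta_{t+1}$.

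The main point needing care is the normalization convention for $\|\cdot\|_{\mathrm{TV}}$. The recursion works for either convention (sum of absolute values, or half of it), as long as the same convention is used in Definition~\ref{def:delta_cdc} and in \eqref{eq:delta_P_traj}; the pointwise bound by $\Delta_t$ is valid under either.

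A second subtlety is that $\mathbb{P}_K(S_t\mid\cdot)$ is the marginalized, time-dependent kernel. Under the closed loop, the true conditional law of $S_t$ given the past trajectory may depend on the hidden inner state and not only on $(S_{t-1},A_{t-1},X_{t-1}^\star)$. We take the factorization \eqref{eq:traj_K} as given, so the kernel used is exactly the one in Definition~\ref{def:delta_cdc}. Because $\Delta_t$ is a supremum over $(s,a,x^\star)$, the bound holds uniformly and no further dependence issues arise.
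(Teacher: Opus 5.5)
Your proof is correct and matches the paper's argument step for step. Both use the same add-and-subtract of the mixed term $\mathbb{P}_K^{\pi_R^\star}(\tau_{t-1})\,\mathbb{P}_R$ with the triangle inequality to get $\Delta P(t)\le \Delta_t+\Delta P(t-1)$, and unroll from $\Delta P(0)=0$ via Assumption~\ref{assump:initial_state}; your caveat that \eqref{eq:traj_K} treats the marginalized closed-loop kernel as Markov in $(S_{t-1},A_{t-1},X_{t-1}^\star)$ is fair and reflects an assumption the paper also makes implicitly.
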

\begin{proof}\label{lemma_proof}
Following a recursive expansion analogous to \cite[Appendix C]{11107987}, we substitute \eqref{eq:traj_R} and \eqref{eq:traj_K} into the definition of $\Delta P(t)$. By separating the sum over the trajectory $\tau_t$ into the history $\tau_{t-1}$, the current state $S_t$, and the current actions $(A_t, X_t^\star)$, we can factor out the shared policy term:
\begin{align}
\Delta P(t) &= \frac{1}{2}\sum_{\tau_{t-1}, S_t} \underbrace{\left[ \sum_{A_t, X_t^\star} \pi_R^\star(A_t, X_t^\star \mid S_t) \right]}_{= 1} \nonumber \\
&\quad \times \left| \mathbb{P}_{K}^{\pi_R^\star}(\tau_{t-1})\mathbb{P}_{K} - \mathbb{P}_{R}^{\pi_R^\star}(\tau_{t-1})\mathbb{P}_{R} \right|.
\end{align}
Because the policy $\pi_R^\star$ is a valid conditional probability distribution, the sum over the action space evaluates to one. Next, by adding and subtracting the cross-term $\mathbb{P}_{R}(S_t \mid S_{t-1}, \dots) \mathbb{P}_{K}^{\pi_R^\star}(\tau_{t-1})$ to the remaining absolute difference and applying the triangle inequality, we obtain:
\begin{align}
\Delta P(t) \le{}& \frac{1}{2}\sum_{\tau_{t-1}, S_t} \mathbb{P}_{K}^{\pi_R^\star}(\tau_{t-1}) \left| \mathbb{P}_{K} - \mathbb{P}_{R} \right| \nonumber \\
& + \frac{1}{2}\sum_{\tau_{t-1}, S_t} \mathbb{P}_{R} \left| \mathbb{P}_{K}^{\pi_R^\star}(\tau_{t-1}) - \mathbb{P}_{R}^{\pi_R^\star}(\tau_{t-1}) \right|.
\end{align}

Recognizing that the first sum is bounded by the maximum one-step total variation mismatch $\Delta_t$ (as defined in \eqref{eq:delta_t_def}), and the second sum simplifies to $\Delta P(t-1)$ because $\mathbb{P}_R$ sums to one over $S_t$, the above equation reduces to 
\begin{equation}
\Delta P(t) \le \Delta_t + \Delta P(t-1).
\end{equation}
Applying this inequality recursively from $t=1$ to $T$, and noting that $\Delta P(0) = 0$ due to the identical initial state distributions (see Assumption~\ref{assump:initial_state}), yields the desired result. \hfill \QED 
\end{proof}

Equipped with the trajectory divergence bound we are in conditions to prove Theorem \ref{thm:probabilistic_safety_new}.

\begin{proof}[Proof of Theorem~\ref{thm:probabilistic_safety_new}]\label{thm_proof}
We begin by establishing the probability of safety on the reduced-order model. Although this implication appears in~\cite{9718160}, we include it here for compactness. By the union bound, the probability of the reduced-order model failing before time $T$ is bounded by its expected cumulative cost:
\begin{align}
\mathbb{P}_{R}\left(\bigcup_{t=0}^{T} \{S_t \notin \mathcal{S}_{\mathrm{safe}}\}\right)
&\leq \sum_{t=0}^{T} \mathbb{P}_{R}(S_t \notin \mathcal{S}_{\mathrm{safe}}) \nonumber\\
&\leq  J_c^{R}(\pi_R^\star) \leq \delta, \label{eq:union_bound_safety}
\end{align}
where the last inequality follows from the definition of $\pi_R^\star$, see \eqref{eq:reduced_cmdp}. Thus, it follows that
\begin{equation}\label{eq:reduced_safety_prob}
\mathbb{P}_R\left(\bigcap_{t=0}^{T} \{S_t \in \mathcal{S}_{\mathrm{safe}}\}\right) \ge 1 - \delta.
\end{equation}

We define to simplify the notation 
$$\mathcal{E} := \bigcap_{t=0}^{T} \{S_t \in \mathcal{S}_{\mathrm{safe}}\}.$$

Note that the probability of $\mathcal{E}$ depends exclusively on the marginal distribution of the outer-state trajectory, $S_{0:T} = (S_0, \dots, S_T)$. Let $\mathbb{P}_{K}^{S_{0:T}}$ and $\mathbb{P}_{R}^{S_{0:T}}$ denote these marginal distributions under the closed-loop and reduced-order models, respectively.

By the definition of total variation distance (see e.g., \cite[Chap.~3]{DBLP:books/cu/10/D2010}), the absolute difference in the probability of $\mathcal{E}$ under two probability measures is upper bounded by the total variation distance between the marginal state distributions
\begin{equation}\label{eq:event_tv_bound}
\left| \mathbb{P}_{K}(\mathcal{E}) - \mathbb{P}_{R}(\mathcal{E}) \right|
\le
\left\| \mathbb{P}_{K}^{S_{0:T}} -\mathbb{P}_{R}^{S_{0:T}}\right\|_{TV}.
\end{equation}

Furthermore, since the outer-state trajectory is a marginal of the full joint trajectory $\tau_T$, the total variation distance between the corresponding marginal distributions is upper bounded by the total variation distance between the corresponding joint distributions~\cite{Wai19}, i.e., 
\begin{equation}\label{eq:tv_marginal_joint}
\left\| \mathbb{P}_{K}^{S_{0:T}} -\mathbb{P}_{R}^{S_{0:T}}\right\|_{TV}
\le
\left\| \mathbb{P}_{K}^{\pi_R^\star}(\tau_T)- \mathbb{P}_{R}^{\pi_R^\star}(\tau_T)\right\|_{TV}.
\end{equation}

From the definition of the trajectory divergence $\Delta P(T)$ in \eqref{eq:delta_P_traj}, the difference between the probabilities of the event $\mathcal{E}$ under the two models is bounded by $\Delta P(T)$
\begin{equation}
\left| \mathbb{P}_{K}(\mathcal{E}) - \mathbb{P}_{R}(\mathcal{E}) \right|
\le \Delta P(T).
\end{equation}

To complete the proof we are left to show that $\Delta P(T) \leq L(e_0+\beta D)/(1-\alpha)$.  
It follows from Lemma \ref{lem:traj_divergence} and Proposition \ref{prop:delta_bound_cdc} that
\begin{equation}\label{eq:sum_delta_t}
\Delta P(T) \le \sum_{t=0}^{T-1} L  \left( \alpha^t e_0 + \beta \sum_{\ell=1}^{t}\alpha^{t-\ell} d_\ell \right).
\end{equation}

We evaluate the two sums separately. The leftmost sum is upper bounded by the infinite geometric series
\begin{equation}
\sum_{t=0}^{T-1} \alpha^t e_0 \le e_0 \sum_{t=0}^{\infty} \alpha^t = e_0 \frac{1}{1-\alpha}.
\end{equation}

For the reference variation term, exchanging the order of summations yields
\begin{equation}\label{eqn_aux_theo1}
\sum_{t=1}^{T-1} \sum_{\ell=1}^{t} \alpha^{t-\ell} d_\ell = \sum_{\ell=1}^{T-1} d_\ell \sum_{t=\ell}^{T-1} \alpha^{t-\ell}.
\end{equation}

The inner sum is a finite geometric series. Therefore, it is upper bounded by $1/(1-\alpha)$. Applying this to \eqref{eqn_aux_theo1} yields
\begin{equation}
\sum_{\ell=1}^{T-1} d_\ell \sum_{k=0}^{T-1-\ell} \alpha^k \le \frac{1}{1-\alpha} \sum_{\ell=1}^{T-1} d_\ell.
\end{equation}

Assumption~\ref{assump:reference_variation_cdc} and the nonnegativity of $\{d_t\}_{t=1}^T$ yields
\begin{equation}
\sum_{\ell=1}^{T-1} d_\ell \le \sum_{\ell=1}^{T} d_\ell \le D.
\end{equation}

Combining these sums back into \eqref{eq:sum_delta_t} yields the stated bound for $\Delta P(t)$ thus, completing the proof of the result. \hfill \QED 
\end{proof}
\section{Experimental Results}\label{sec_numerical_examples}
In this section, we evaluate our theoretical findings by training a constrained policy on a reduced-order model and deploying it on the full-order system using an inner-loop controller to track the inner state reference. In contrast to~\cite{11107987}, which focuses on reward degradation after transfer, our goal here is to assess whether the reduced-order safety guarantees persist on the full-order dynamics.

The state of the full-order planar quadrotor is given by $[p_x,\dot{p}_x,p_z,\dot{p}_z,\theta,\dot{\theta}]^\top$, where $p_x$ and $p_z$ denote the horizontal and vertical positions of the center of mass, respectively, and $\theta$ is the pitch angle. The control inputs are the collective thrust $F$ and the body pitch moment $M$. For notational simplicity, we omit the explicit time dependence $(t)$ for all continuous-time variables. The continuous-time translational dynamics are:
\begin{align}
\ddot{p}_x &= \frac{F}{m}\sin(\theta), &
\ddot{p}_z &= \frac{F}{m}\cos(\theta)-g,
\label{eq:full_quad_trans}
\end{align}
while the rotational dynamics are governed by $J\ddot{\theta} = M$.

Throughout the experiments, we use a mass $m=1\,\mathrm{kg}$, gravity $g=9.81\,\mathrm{m/s^2}$, resulting in a nominal hover thrust of $F_{\mathrm{hover}} = mg = 9.81\,\mathrm{N}$. The pitch moment of inertia is $J=0.02\,\mathrm{kg\,m^2}$. We discretize the dynamics using semi-implicit Euler integration with a sampling time of $\Delta t = 0.05\,\mathrm{s}$.

We formulate a lower-dimensional training model by replacing the physical attitude state $\theta_t$ with a commanded reference $\theta_t^\star$, treated as an input. The resulting reduced-order model retains only the translational state $S_t = [p_{x,t},\dot{p}_{x,t},p_{z,t},\dot{p}_{z,t}]^\top$. The RL agent selects the action $A_t = [\Delta F_t,\theta_t^\star]^\top$, where $\Delta F_t$ is the thrust increment around hover ($F_t = F_{\mathrm{hover}}+\Delta F_t$). Substituting the input $\theta_t^\star$ into the translational accelerations from \eqref{eq:full_quad_trans} and applying semi-implicit Euler discretization yields the discrete reduced-order model
\begin{align}
\dot{p}_{x,t+1} &= \dot{p}_{x,t} + \ddot{p}_{x,t} \Delta t, &
p_{x,t+1} &= p_{x,t} + \dot{p}_{x,t+1}\Delta t, \nonumber\\
\dot{p}_{z,t+1} &= \dot{p}_{z,t} + \ddot{p}_{z,t} \Delta t, &
p_{z,t+1} &= p_{z,t} + \dot{p}_{z,t+1}\Delta t,
\label{eq:reduced_discrete_rewrite}
\end{align}
where $\ddot{p}_{x,t} = F_t\sin(\theta_t^\star)/m$ and $\ddot{p}_{z,t} = {F_t}\cos(\theta_t^\star)/m-g$.
\subsection{Reward, Safety Cost, and PPO-Lagrangian Training}

The task is to drive the vehicle toward the goal position $(p_{x,g},p_{z,g})=(9,9)$, while remaining in the safe set $\mathcal{S}_{\mathrm{safe}}=\{(p_x,p_z): p_x\le 9\}$. The reward is defined as:
\begin{align*}
r_t
&=
-\frac{(p_{x,t}-p_{x,g})^2+(p_{z,t}-p_{z,g})^2}{100} \\
&\quad
+10\,\mathbf{1}\!\left\{\sqrt{(p_{x,t}-p_{x,g})^2+(p_{z,t}-p_{z,g})^2}<0.1\right\},\nonumber
\label{eq:task_reward_rewrite}
\end{align*}
which penalizes the distance to the goal and provides a bonus upon reaching a small neighborhood of the target. The corresponding stage safety cost is the indicator $c_t = \mathbf{1}\{p_{x,t}>9\}$.

The policy is trained on this reduced-order CMDP using PPO-Lagrangian \cite{ray2019benchmarking}. At training iteration $k$, the algorithm maximizes the penalized reward
\[
\tilde r_t = r_t - \lambda_k c_t,
\]
where $\lambda_k\ge 0$ is the dual variable. Let
\[
\bar C_{\mathrm{ep},\gamma}
:=
\frac{1}{T}\sum_{t=0}^{T-1}\gamma^t c_t
\]
denote the empirical normalized discounted safety cost over an episode. The multiplier is then updated by dual ascent according to
\[
\lambda_{k+1}
=
\max\!\left(0,\lambda_k+\eta_\lambda\bigl(\bar C_{\mathrm{ep},\gamma}-\delta\bigr)\right).
\]
Although the theoretical development uses an undiscounted finite-horizon safety cost, the PPO-Lagrangian implementation employs this discounted surrogate to improve optimization stability and temporal credit assignment during training. Since the horizon is finite and the discount factor is chosen close to one, this surrogate remains close to the undiscounted cost used in the analysis. 

In the implementation, the failure tolerance is set to $\delta=0.025$, discount factor $\gamma=0.994$, initial multiplier $\lambda_0=1$, step size $\eta_\lambda=0.02$, learning rate $4\times 10^{-4}$, and PPO clip range $0.05$.

\subsection{Deployment on the Full-Order System with Inner-Loop Attitude Control}

After training, the policy $\pi_R^\star$ is deployed on the full-order quadrotor. The policy observes the translational state $[p_{x,t},\dot{p}_{x,t},p_{z,t},\dot{p}_{z,t}]^\top$ and outputs the action pair $(\Delta F_t,\theta_t^\star)$. Although the transfer analysis is stated for a generic inner-state reference $X_t^\star$, in the quadrotor experiments we further exploit the problem structure and train on an even simpler interface, where the policy outputs only the desired pitch angle $\theta_t^\star$. This reduces the dimension of the learning problem even further, while the corresponding reference-rate signal is reconstructed locally for the inner-loop controller.

To track the commanded reference, the controller in \eqref{eq:conrol_law} is implemented as a proportional-derivative law that outputs the pitch moment $M_t$:
\begin{equation}\label{eq:inner_loop_M}
M_t
=
-K_p(\theta_t-\theta_t^\star)
-K_d(\dot{\theta}_t-\dot{\theta}_{t,\mathrm{est}}^\star),
\end{equation}
where $\dot{\theta}_{t,\mathrm{est}}^\star$ is an estimate of the reference rate. This signal is calculated using a backward difference followed by a low-pass filter:
\begin{align}
\dot{\theta}_{t,\mathrm{raw}}^\star
&=
\frac{\theta_t^\star-\theta_{t-1}^\star}{\Delta t},
\quad  
\dot{\theta}_{t,\mathrm{est}}^\star
=
\alpha_f \dot{\theta}_{t-1,\mathrm{est}}^\star
+
(1-\alpha_f)\dot{\theta}_{t,\mathrm{raw}}^\star,
\label{eq:theta_filter_rewrite}
\end{align}
with $\alpha_f = \exp(-\Delta t/T_f)$ and filter time constant $T_f=0.1\,\mathrm{s}$. The role of this filter is to smooth the high-frequency variation introduced by numerical differentiation of $\theta_t^\star$ while keeping the estimated rate responsive enough for attitude tracking. Since $\Delta t=0.05\,\mathrm{s}$, choosing $T_f=0.1\,\mathrm{s}$ provides a simple compromise between noise attenuation and additional phase lag in the inner loop. 

The safety guarantees are evaluated across different inner-loop gains. The inner-loop gains are parameterized by the natural frequency $\omega_n$ and the damping ratio $\zeta$:
$$
K_p = J\omega_n^2,
\qquad
K_d = 2J\zeta\omega_n.
$$
The natural frequency ranges from $\omega_n = 2$ to $12$\,rad/s, and the damping ratio ranges from $\zeta = 0.2$ to $1.0$. Each gain pair is evaluated over $N=100$ initial conditions. The initial positions are sampled from $p_{x,0} \in \{1,\dots,8\}$ and $p_{z,0} \in \{1,\dots,9\}$. The initial velocities are sampled uniformly as $\dot{p}_{x,0}, \dot{p}_{z,0} \sim \mathcal{U}(-1,1)$. The initial rotational state is set to $[\theta_0,\dot{\theta}_0]^\top =[0,0]^\top$. 

Figure~\ref{fig:failure_heatmap} shows the probability that an episode becomes unsafe over the finite horizon. This quantity is estimated by the empirical failure rate:
\begin{align*}
\hat p_{\mathrm{fail}}
=
\frac{1}{N}
\sum_{i=1}^{N}
\mathbf{1}\!\left\{
\sum_{t=0}^{T-1} c_t^{(i)} \ge 1
\right\}.
\label{eq:empirical_fail_prob}
\end{align*}

Figure~\ref{fig:error_heatmap} shows the mean attitude tracking error:
\[
\bar{e}_\theta = \frac{1}{T}\sum_{t=0}^{T-1} |\theta_t - \theta_t^\star|
\]
for the same gain values.

\begin{figure}[htbp]
    \centering
    \includegraphics[width=\linewidth]{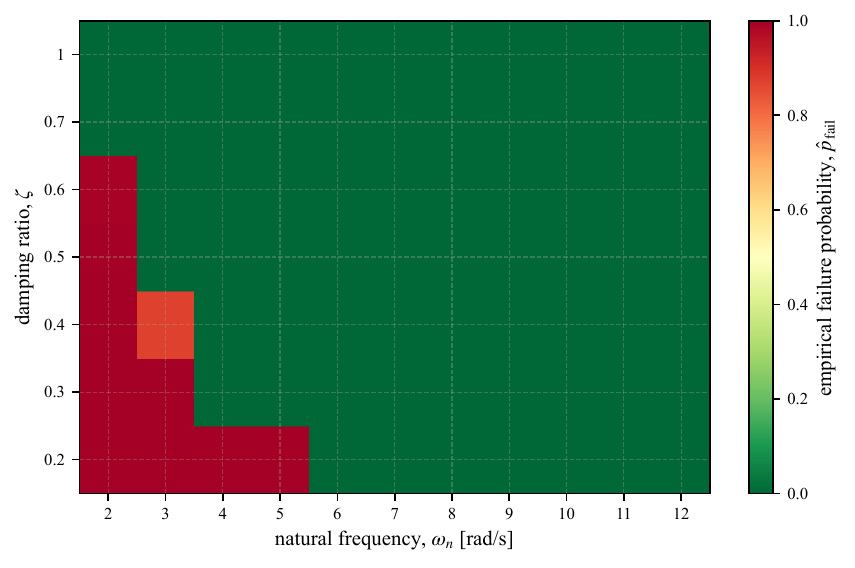}
    \caption{Empirical failure probability $\hat p_{\mathrm{fail}}$ of the transferred policy across inner-loop natural frequencies $\omega_n \in [2, 12]$\,rad/s and damping ratios $\zeta \in [0.2, 1.0]$. Safety violations occur in the region where $\omega_n \le 5$\,rad/s and $\zeta \le 0.6$. Outside this parameter region, the failure probability evaluates to 0.0.}
    \label{fig:failure_heatmap}
\end{figure}

\begin{figure}[htbp]
    \centering
    \includegraphics[width=\linewidth]{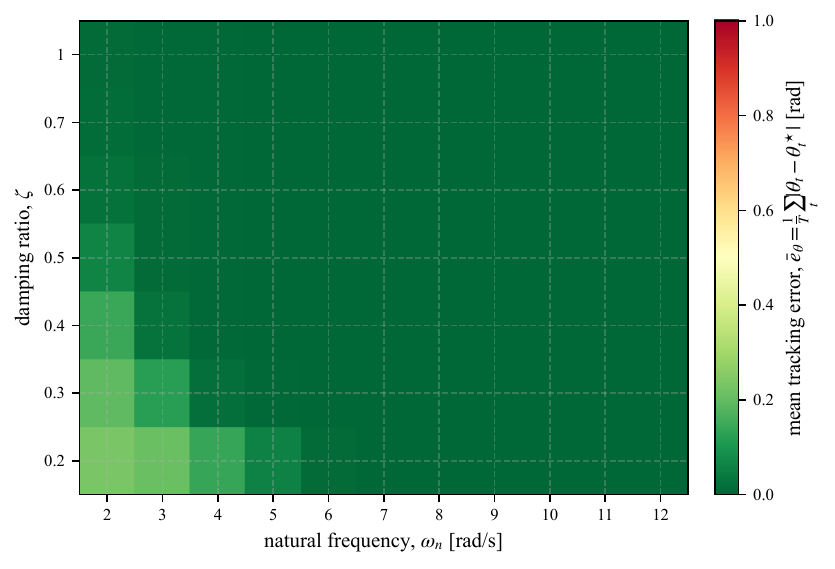}
    \caption{Mean attitude tracking error $\bar{e}_\theta = \frac{1}{T}\sum_{t=0}^{T-1} |\theta_t - \theta_t^\star|$ across inner-loop natural frequencies $\omega_n \in [2, 12]$\,rad/s and damping ratios $\zeta \in [0.2, 1.0]$. The elevated tracking error corresponds directly to the safety degradation observed in Figure~\ref{fig:failure_heatmap}. }
    \label{fig:error_heatmap}
\end{figure}

As shown in Figure~\ref{fig:error_heatmap}, the mean attitude tracking error is highest for small $\omega_n$ and small $\zeta$, and it decreases as either parameter increases. Figure~\ref{fig:failure_heatmap} demonstrates that safety violations appear in this exact same parameter region. As $\omega_n$ and $\zeta$ increase, both the tracking error and the empirical failure probability decrease. For larger inner-loop gains, the failure probability drops to zero across all tested initial conditions. These results validate Theorem~\ref{thm:probabilistic_safety_new}: controllers with poor tracking of the reference $\theta_t^\star$ are associated with a higher probability of failure, whereas improved tracking leads to safer deployment on the full-order system. 

At the same time, these experiments highlight that evaluating the contribution of the reference-variation term is subtle. It is worth noting, however, that obtaining a consistent empirical evaluation of the reference signal variation remains challenging. In practice, the dynamics of the closed-loop system inherently intertwine the effects of the inner-loop controller with the reference signal generated by the policy, making it difficult to perfectly isolate the signal's independent variation. Exploring algorithmic constraints or architectural methods to strictly guarantee a bounded variation on the reference signal during deployment stands as an important direction for future work. 

\section{Conclusion}\label{sec_conclusion}
In this paper, we developed a framework to analyze the zero-shot safety guarantees from reduced-order models to full-order cascade dynamical systems. By formulating the RL training on the reduced-order model as a constrained Markov decision process, we derived a finite-horizon lower bound on the probability of safe deployment for the transferred policy. Our analysis explicitly shows how transfer-induced safety degradation depends on the tracking performance of the inner-loop controller, the initial tracking error, and the cumulative variation of the reference signal.

We validated these results on a quadrotor navigation task, showing that inner-loop controllers with better tracking performance are more effective at preserving the safety guarantees obtained during reduced-order training. The experiments also highlight that the effect of reference variation is subtle, since the commanded reference sequence is generated in closed loop and is therefore intertwined with the system trajectory during deployment. An important direction for future work is to develop policy optimization and control design methods that explicitly constrain or certify reference variation, thereby yielding tighter transfer-safety guarantees for real-world deployment.
\section*{Acknowledgment}
This work was sponsored by the Office of Naval Research (ONR), under contract number N00014-23-1-2377.

\bibliographystyle{ieeetr}
\bibliography{bib}

@inproceedings{molnar2023safetycritical,
  author    = {Tam{\'a}s G. Molnar and Aaron D. Ames},
  title     = {Safety-Critical Control with Bounded Inputs via Reduced Order Models},
  booktitle = {2023 American Control Conference (ACC)},
  pages     = {1414--1421},
  year      = {2023},
  publisher = {IEEE},
  doi       = {10.23919/ACC55779.2023.10155871}
}

@misc{ray2019benchmarking,
  title         = {Benchmarking Safe Exploration in Deep Reinforcement Learning},
  author        = {Alex Ray and Joshua Achiam and Dario Amodei},
  year          = {2019},
  eprint        = {1910.01708},
  archivePrefix = {arXiv},
  primaryClass  = {cs.LG}
}

@article{mnih2015dqn,
  author  = {Volodymyr Mnih and Koray Kavukcuoglu and David Silver and Andrei A. Rusu and Joel Veness and Marc G. Bellemare and Alex Graves and Martin Riedmiller and Andreas K. Fidjeland and Georg Ostrovski and Stig Petersen and Charles Beattie and Amir Sadik and Ioannis Antonoglou and Helen King and Dharshan Kumaran and Daan Wierstra and Shane Legg and Demis Hassabis},
  title   = {Human-level control through deep reinforcement learning},
  journal = {Nature},
  volume  = {518},
  number  = {7540},
  pages   = {529--533},
  year    = {2015},
  doi     = {10.1038/nature14236}
}

@article{silver2016alphago,
  author  = {David Silver and Aja Huang and Chris J. Maddison and Arthur Guez and Laurent Sifre and George van den Driessche and Julian Schrittwieser and Ioannis Antonoglou and Veda Panneershelvam and Marc Lanctot and Sander Dieleman and Dominik Grewe and John Nham and Nal Kalchbrenner and Ilya Sutskever and Timothy Lillicrap and Madeleine Leach and Koray Kavukcuoglu and Thore Graepel and Demis Hassabis},
  title   = {Mastering the game of Go with deep neural networks and tree search},
  journal = {Nature},
  volume  = {529},
  number  = {7587},
  pages   = {484--489},
  year    = {2016},
  doi     = {10.1038/nature16961}
}

@article{silver2017alphagozero,
  author  = {David Silver and Julian Schrittwieser and Karen Simonyan and Ioannis Antonoglou and Aja Huang and Arthur Guez and Thomas Hubert and Lucas Baker and Matthew Lai and Adrian Bolton and Yutian Chen and Timothy Lillicrap and Fan Hui and Laurent Sifre and George van den Driessche and Thore Graepel and Demis Hassabis},
  title   = {Mastering the game of Go without human knowledge},
  journal = {Nature},
  volume  = {550},
  number  = {7676},
  pages   = {354--359},
  year    = {2017},
  doi     = {10.1038/nature24270}
}

@misc{schulman2017ppo,
  author        = {John Schulman and Filip Wolski and Prafulla Dhariwal and Alec Radford and Oleg Klimov},
  title         = {Proximal Policy Optimization Algorithms},
  year          = {2017},
  eprint        = {1707.06347},
  archivePrefix = {arXiv},
  primaryClass  = {cs.LG}
}

@inproceedings{
zubia2025robust,
title={Robust Transfer of Safety-Constrained Reinforcement Learning Agents},
author={Markel Zubia and Thiago D. Sim{\~a}o and Nils Jansen},
booktitle={The Thirteenth International Conference on Learning Representations},
year={2025},
url={https://openreview.net/forum?id=rvXdGL4pCJ}
}

@inproceedings{
as2025spidr,
title={{SP}i{DR}: A Simple Approach for Zero-Shot Safety in Sim-to-Real Transfer},
author={Yarden As and Chengrui Qu and Benjamin Unger and Dongho Kang and Max van der Hart and Laixi Shi and Stelian Coros and Adam Wierman and Andreas Krause},
booktitle={The Thirty-ninth Annual Conference on Neural Information Processing Systems},
year={2025},
url={https://openreview.net/forum?id=Pe1ypX9gBO}
}

@misc{tessler2018reward,
  title={Reward Constrained Policy Optimization},
  author={Chen Tessler and Daniel J. Mankowitz and Shie Mannor},
  year={2018},
  eprint={1805.11074},
  archivePrefix={arXiv},
  primaryClass={cs.LG}
}

@inproceedings{chen2024rom,
  title={Reinforcement Learning for Reduced-order Models of Legged Robots},
  author={Chen, Yu-Ming and Bui, Hien and Posa, Michael},
  booktitle={2024 IEEE International Conference on Robotics and Automation (ICRA)},
  year={2024}
}

@inproceedings{zhao2020simtoreal,
  title={Sim-to-real transfer in deep reinforcement learning for robotics: a survey},
  author={Zhao, Wenshuai and Queralta, Jorge Pe{\~n}a and Westerlund, Tomi},
  booktitle={2020 IEEE Symposium Series on Computational Intelligence (SSCI)},
  pages={737--744},
  year={2020},
  organization={IEEE}
}

@INPROCEEDINGS{11107987,
  author={Rabiei, Shima and Mishra, Sandipan and Paternain, Santiago},
  booktitle={2025 American Control Conference (ACC)}, 
  title={Transfer Learning for a Class of Cascade Dynamical Systems}, 
  year={2025},
  volume={},
  number={},
  pages={231-238},
  doi={10.23919/ACC63710.2025.11107987}}

@inproceedings{achiam2017cpo,
  title={Constrained policy optimization},
  author={Achiam, Joshua and Held, David and Tamar, Aviv and Abbeel, Pieter},
  booktitle={International conference on machine learning},
  pages={22--31},
  year={2017},
  organization={PMLR}
}

@inproceedings{cheng2019barrier,
  title={End-to-end safe reinforcement learning through barrier functions for safety-critical continuous control tasks},
  author={Cheng, Richard and Orosz, G{\'a}bor and Murray, Richard M and Burdick, Joel W},
  booktitle={Proceedings of the AAAI Conference on Artificial Intelligence},
  volume={33},
  pages={3387--3395},
  year={2019}
}

@article{wabersich2021predictive,
  title={A predictive safety filter for learning-based control of constrained nonlinear dynamical systems},
  author={Wabersich, Kim P and Zeilinger, Melanie N},
  journal={Automatica},
  volume={129},
  pages={109597},
  year={2021},
  publisher={Elsevier}
}

@article{hsu2023simlabreal,
  title={Sim-to-Lab-to-Real: Safe reinforcement learning with shielding and generalization guarantees},
  author={Hsu, Kai-Chieh and Ren, Allen Z. and Nguyen, Duy P. and Majumdar, Anirudha and Fisac, Jaime F.},
  journal={Artificial Intelligence},
  volume={314},
  pages={103811},
  year={2023}
}

@article{zhang2024safetytransfer,
  title={Safety reinforcement learning control via transfer learning},
  author={Zhang, Quanqi and Wu, Chengwei and Tian, Haoyu and Gao, Yabin and Yao, Weiran and Wu, Ligang},
  journal={Automatica},
  volume={166},
  pages={111714},
  year={2024},
  doi={10.1016/j.automatica.2024.111714}
}

@book{DBLP:books/cu/10/D2010,
  author       = {Rick Durrett},
  title        = {Probability: Theory and Examples, 4th Edition},
  publisher    = {Cambridge University Press},
  year         = {2010},
  url          = {https://doi.org/10.1017/CBO9780511779398},
  doi          = {10.1017/CBO9780511779398},
  isbn         = {9780511779398},
  bibsource    = {dblp computer science bibliography, https://dblp.org}
}

@book{Wai19,
 address = {Cambridge, UK},
 author = {M. J. Wainwright},
 publisher = {Cambridge University Press},
 title = {High-dimensional statistics: A non-asymptotic viewpoint},
 year = {2019}
}

@article{bhatnagar2005actor,
  title={An actor-critic algorithm for constrained Markov decision processes},
  author={Bhatnagar, Shalabh and Lakshmikanthan, K},
  journal={Systems \& control letters},
  volume={54},
  number={10},
  pages={1011--1022},
  year={2005},
  publisher={Elsevier}
}

@ARTICLE{9718160,
  author={Paternain, Santiago and Calvo-Fullana, Miguel and Chamon, Luiz F. O. and Ribeiro, Alejandro},
  journal={IEEE Transactions on Automatic Control}, 
  title={Safe Policies for Reinforcement Learning via Primal-Dual Methods}, 
  year={2023},
  volume={68},
  number={3},
  pages={1321-1336},
  doi={10.1109/TAC.2022.3152724}}

@misc{paternain2019constrainedreinforcementlearningzero,
      title={Constrained Reinforcement Learning Has Zero Duality Gap}, 
      author={Santiago Paternain and Luiz F. O. Chamon and Miguel Calvo-Fullana and Alejandro Ribeiro},
      year={2019},
      eprint={1910.13393},
      archivePrefix={arXiv},
      primaryClass={cs.LG},
      url={https://arxiv.org/abs/1910.13393}, 
}

@book{Khalil:1173048,
      author        = "Khalil, Hassan K",
      title         = "{Nonlinear systems; 3rd ed.}",
      publisher     = "Prentice-Hall",
      address       = "Upper Saddle River, NJ",
      year          = "2002",
}

@article{devroye2018total,
  title={The total variation distance between high-dimensional Gaussians with the same mean},
  author={Devroye, Luc and Mehrabian, Abbas and Reddad, Tommy},
  journal={arXiv preprint arXiv:1810.08693},
  year={2018}
}

@article{chen2024probabilistic,
  title={Probabilistic constraint for safety-critical reinforcement learning},
  author={Chen, Weiqin and Subramanian, Dharmashankar and Paternain, Santiago},
  journal={IEEE Transactions on Automatic Control},
  year={2024},
  publisher={IEEE}
}

@book{altman1999constrained,
  title={Constrained Markov Decision Processes},
  author={Altman, Eitan},
  year={1999},
  publisher={Chapman \& Hall/CRC},
  address={Boca Raton, FL, USA},
  isbn={9780849303371}
}

\end{document}